\newtheorem{definition}{Definition}
\newtheorem{theorem}{Theorem}
\newtheorem{remark}{Remark}
\begin{document}

\begin{frontmatter}

\title{Robust-GBDT: GBDT with Nonconvex Loss for Tabular Classification in the Presence of Label Noise and Class Imbalance}

\author[first]{Jiaqi Luo}
\ead{jiaqi.luo@dukekunshan.edu.cn}
\author[first]{Yuedong Quan}
\ead{yuedong.quan@dukekunshan.edu.cn}
\author[first]{Shixin Xu \corref{cor1}}
\cortext[cor1]{Corresponding author}
\ead{shixin.xu@dukekunshan.edu.cn}
\affiliation[first]{organization={Data Science Research Center, Duke Kunshan University},
            addressline={No.8 Duke Avenue}, 
            city={Kunshan},
            postcode={215000}, 
            state={Jiangsu Province},
            country={China}}

\begin{abstract}

Dealing with label noise in tabular classification tasks poses a persistent challenge in machine learning. While robust boosting methods have shown promise in binary classification, their effectiveness in complex, multi-class scenarios is often limited. Additionally, issues like imbalanced datasets, missing values, and computational inefficiencies further complicate their practical utility.
This study introduces Robust-GBDT, a groundbreaking approach that combines the power of Gradient Boosted Decision Trees (GBDT) with the resilience of nonconvex loss functions against label noise. By leveraging local convexity within specific regions, Robust-GBDT demonstrates unprecedented robustness, challenging conventional wisdom.
Through seamless integration of advanced GBDT with a novel Robust Focal Loss tailored for class imbalance, Robust-GBDT significantly enhances generalization capabilities, particularly in noisy and imbalanced datasets. Notably, its user-friendly design facilitates integration with existing open-source code, enhancing computational efficiency and scalability.
Extensive experiments validate Robust-GBDT's superiority over other noise-robust methods, establishing a new standard for accurate classification amidst label noise. This research heralds a paradigm shift in machine learning, paving the way for a new era of robust and precise classification across diverse real-world applications.

\end{abstract}

\begin{keyword}
Label Noise Learning \sep Robust Loss\sep GBDT  \sep Imbalanced Learning
\end{keyword}

\end{frontmatter}

\section{Introduction}
\label{s:intro}

Tabular data classification is a fundamental task in machine learning, relying heavily on the accuracy of labels. However, labels often contain errors stemming from various sources such as human annotators, data collection procedures, ambiguous labeling rules, or data outliers. These inaccuracies introduce noise into the training data, hampering the ability of models to discern true patterns and achieve satisfactory performance \cite{frenay2013classification}.

To mitigate the challenges posed by label noise, researchers have turned to ensemble methods. Bagging \cite{breiman1996bagging} has proven effective in handling datasets with significant label noise  \cite{dietterich2000experimental}, leveraging the predictions of multiple models to yield a more robust outcome. However, its performance diminishes when dealing with datasets containing small amounts of noise. 

In contrast, boosting methods \cite{freund1997decision,friedman2001greedy} excel in achieving higher predictive accuracy and have demonstrated outstanding performance in low or noise-free scenarios. This success is attributed to iterative refinement of model predictions and a focus on misclassified instances. Yet, boosting is sensitive to substantial label noise, which can degrade its performance.

Recognizing boosting's potential, numerous researchers have sought to enhance its capabilities, making it more adept at handling substantial noise \cite{ wang2019splboost, li2018boosting, luo2023trboost}. However, these approaches have not been devoid of limitations. For instance, many are tailored exclusively for binary classification, making them unsuitable for more complex multi-class classification tasks. Moreover, they struggle to address issues associated with imbalanced datasets and missing values, common challenges in real-world applications. Additionally, these methods may not effectively harness parallel processing and GPU acceleration, hindering computational efficiency, especially when dealing with high-dimensional or large-scale datasets. As a result, their practical applicability has been significantly constrained.

Advanced Gradient Boosting Decision Trees (GBDT) models, exemplified by XGBoost \cite{chen2016xgboost} and LightGBM \cite{ke2017lightgbm}, unite state-of-the-art learners and optimization techniques. This fusion results in improved accuracy, faster training, and enhanced handling of intricate datasets. These models surmount the computational efficiency constraints that often hamper current noise-robust boosting methods. Consequently, they have emerged as the most efficient tools for tabular data classification \cite{borisov2022deep, grinsztajn2022tree}, establishing themselves as the preferred choice for practical problem-solving.
However, when employing the cross entropy loss, which is sensitive to label noise due to its nonsymmetric and unbounded nature \cite{zhang2018generalized}, the impact of noise is amplified.

At the core of label-noise learning lies the challenge of avoiding overfitting on noisy datasets, and the robustness of boosting algorithms is influenced by three key factors: the choice of loss function, the construction of base learners, and the implementation of regularization techniques.
Given the proven effectiveness of advanced GBDT models in utilizing efficient learners and effective regularization techniques, a coherent strategy to enhance the robustness of GBDT is to seamlessly integrate robust loss functions, tailored to mitigate the impact of noisy labels.
However, it is crucial to note that advanced GBDT models leverage Newton's method for optimization, which necessitates the loss function to be positive definite within a specific region. Consequently, some existing robust loss functions may not directly satisfy this requirement, necessitating careful consideration and adaptation.

In this paper, we present a groundbreaking theoretical insight: nonconvex loss functions, when subject to specific constraints, can be effectively employed in GBDT models. These nonconvex losses act as a form of regularization, mitigating overfitting issues caused by outliers. Building upon this theoretical foundation, we introduce Robust-GBDT, a novel robust gradient boosting model that seamlessly integrates the advanced GBDT framework with robust loss functions. Furthermore, we enhance the existing repertoire of robust losses by introducing the Robust Focal Loss, a tailored loss function designed to tackle the challenging problem of class imbalance.

Robust-GBDT outperforms existing bagging and robust boosting techniques, delivering more accurate predictions and significantly enhancing generalization capabilities, even in scenarios plagued by label noise or class imbalance. By harnessing the synergy between advanced GBDT and robust loss functions, our approach unlocks a new level of resilience and accuracy in classification tasks.

Moreover, Robust-GBDT offers user-friendly implementation and can seamlessly integrate with existing open-source code by simply replacing the objective function with RFL. This feature allows it to harness the rapid training advantages inherent in existing GBDT models, empowering it to tackle complex datasets effectively and yield numerous other benefits.

The main contributions are summarized as follows:

\begin{itemize}
    \item \textbf{Theoretical Insight}: The paper establishes that the loss function employed in advanced Gradient Boosting Decision Trees (GBDT), particularly those based on Newton's method, need not necessarily exhibit global convexity. Instead, the loss function only requires convexity within a specific region. This theoretical insight challenges the conventional wisdom and opens up the possibility of using nonconvex robust loss functions with GBDT.
    \item \textbf{Robust Focal Loss}: We enhance existing robust loss functions and introduce a novel robust loss function called Robust Focal Loss. This loss function is specifically designed to address class imbalance, a common issue in many real-world datasets, in addition to label noise.
    \item \textbf{Robust-GBDT Model}: We introduce a new noise-robust boosting model called Robust-GBDT. This model seamlessly integrates the advanced GBDT framework with robust loss functions, making it resilient to label noise in classification tasks.
    \item \textbf{Improved Generalization}: By leveraging the combination of advanced GBDT and robust loss functions, Robust-GBDT generates more accurate predictions and significantly enhances its generalization capabilities, particularly in scenarios marked by label noise and class imbalance.
    \item \textbf{User-Friendly Implementation}: Robust-GBDT is designed to be user-friendly and can easily integrate with existing open-source code. This feature allows for efficient handling of complex datasets while improving computational efficiency. We have made the code open source, which is available at \url{https://github.com/Luojiaqimath/Robust-GBDT}.
\end{itemize}

\section{Related Work}
\label{s:rel}
\subsection{Label-noise Learning}
Label-noise learning is an active topic in machine learning and pertains to the challenging task of training models to effectively handle and mitigate errors or inaccuracies present in the training labels, improving their robustness and generalization.
In the context of addressing label noise, methodologies can be roughly classified based on their incorporation of noise modeling, resulting in two distinct categories: sample-based methods and model-based methods.

Sample-based methods primarily aim to comprehensively characterize the inherent noise structure that serves as a guiding principle during the training phase. The key idea of these approaches is to extract noise-free information from the dataset, allowing the algorithm to obtain better performance and generalization. Methods commonly used include label correction \cite{song2019selfie,gong2022class, wang2021noise}, sample selection \cite{xia2021granular,xia2021random,sun2021co,xia2022adaptive} and weight assignment \cite{liu2015classification, wang2017multiclass,yao2020dual}.

On the contrary, model-based methods avoid explicitly modeling noise. Instead, they are designed to yield models resilient to the destabilizing effects of noise. These techniques function at their core by implementing adept regularization mechanisms. This strategic choice empowers the model to circumvent the risks of overfitting stemming from the presence of erroneous labels. Model-free approaches mainly include robust loss \cite{lyu2019curriculum, ma2020normalized, liu2020peer}, regularization \cite{srivastava2014dropout, szegedy2016rethinking, pereyra2017regularizing}, and ensemble learning \cite{wang2019splboost, li2018boosting, luo2023trboost}.

\subsection{Robust Boosting Algorithm for Tabular Data}
Friedman et al. \cite{friedman2000additive} introduced two robust boosting algorithms: LogitBoost and GentleBoost. LogitBoost enhances the logistic loss function through adaptive Newton steps, assigning reduced penalties to misclassified samples with large margins, often associated with noisy data. Consequently, it exhibits higher noise resilience compared to AdaBoost. GentleBoost optimizes the exponential loss function like AdaBoost, but its adaptive Newton step for calculating optimal base learners mimics LogitBoost. This new method in weak learners gives GentleBoost a more careful rise in sample weight, making it better at dealing with noise.
A similar approach can be observed in MadaBoost \cite{domingo2000madaboost}, which restricts example weights by their initial probabilities. This approach limits the increase of weights assigned to examples, which is observed in AdaBoost.

Theoretical findings \cite{li2018boosting, long2008random} highlight the noise robustness of boosting with nonconvex loss functions. Consequently, researchers have dedicated efforts to investigate the robustness of nonconvex loss-based boosting. An example is RobustBoost \cite{freund2009more}, derived from Freund's Boost-by-Majority algorithm \cite{freund1995boosting}, which demonstrates greater noise robustness than AdaBoost and LogitBoost. SavageBoost \cite{masnadi2008design} introduces the Savage loss function, trading convexity for boundedness. Additionally, Miao et al. proposed RBoost \cite{miao2015rboost}, incorporating a more robust loss function and numerically stable base learners for increased robustness. Emphasizing the significance of nonconvex losses due to their diminishing gradient attributes, Li et al. introduced ArchBoost \cite{li2018boosting}, a provably robust boosting algorithm. Within the ArchBoost framework, a family of nonconvex losses is introduced, resulting in Adaptive Robust Boosting.

In further attempts to enhance boosting's robustness, various new learners have been proposed. SPBL\cite{pi2016self} and SPLBoost\cite{wang2019splboost} both utilize self-paced learning to augment AdaBoost, albeit with distinct approaches. While SPBL adopts a max-margin perspective, SPLBoost adopts a statistical viewpoint. Luo et al.'s TRBoost \cite{luo2023trboost} is a novel GBDT model based on the trust-region algorithm, compatible with a wide array of loss functions, including robust options, as it doesn't necessitate positive definiteness of the Hessian matrix.

\subsection{Robust Loss}
The Categorical Cross Entropy (CCE) stands as the first choice for classification tasks due to its rapid convergence and strong generalization capabilities. However, when confronted with noisy labels, the robust Mean Absolute Error (MAE) loss, as demonstrated by \cite{ghosh2017robust}, exhibits superior generalization compared to the CCE loss. This superiority arises from the MAE loss being the sole candidate that meets the stipulated condition. Nonetheless, the MAE loss encounters limitations that MAE can lead to heightened training challenges and diminished performance.

Consequently, to harness the strengths of both MAE and CCE losses, the Generalized Cross Entropy (GCE) loss was introduced in \cite{zhang2018generalized}. This loss function represents a broader category of noise-robust losses, encapsulating the benefits of both MAE and CCE. 
Another innovative approach is the bi-tempered loss \cite{amid2019robust}, which provides an unbiased generalization of the CCE by leveraging the Bregman divergence.
Additionally, drawing inspiration from the symmetry of the Kullback-Leibler divergence, the Symmetric Cross Entropy (SCE) \cite{wang2019symmetric} was conceptualized. SCE combines the standard CCE loss with a noise tolerance term known as the reverse CCE loss. 

Meanwhile, the Curriculum Loss (CL) \cite{lyu2019curriculum}, serving as a surrogate for the 0-1 loss function, not only furnishes a tight upper bound but also extends seamlessly to multi-class classification scenarios.
The Active-passive Loss (APL) \cite{ma2020normalized} melds two distinct robust loss functions. An active loss component strives to maximize the probability of class membership, while a passive loss component endeavors to minimize the probability of belonging to other classes. Moreover, existing methods often require practitioners to specify noise rates, hence researchers propose a new loss function named Peer Loss (PL) \cite{liu2020peer}, which enables learning from noisy labels and does not require a specification of the noise rates. Recently, researchers have introduced asymmetric loss functions, a novel family tailored to meet the Bayes-optimal condition. These functions extend various commonly-used loss functions and establish the necessary and sufficient conditions to render them asymmetric, thus enhancing noise tolerance.\cite{zhou2023asymmetric}.

\section{Robust-GBDT Algorithm}
\label{s:method}
In current GBDT implementations, each decision tree produces a single output variable. However, when there are multiple outputs required, GBDT generates multiple trees, each corresponding to one of the output variables \cite{zhang2020gbdt}. 
In the case of multiclass classification, GBDT typically employs the "one-vs-all" approach. This means that for each class, a separate binary classification problem is solved.
Therefore, in this section, we discuss the results specifically tailored for binary classification.

\subsection{Second-order GBDT Algorithm}

Given a dataset 
$\mathcal{D}=\{(\mathbf{x}_1, y_1), (\mathbf{x}_2, y_2), \cdots, (\mathbf{x}_n, y_n)\}$~($\mathbf{x}_i \in \mathbb{R}^m, y_i\in \{0, 1\}$), where $\mathbf{x}_i$ is the $i$th sample and $y_i$ is the corresponding ground-truth label.
Let $l(y, p)$ denote the loss function for binary classification, where $p$ represents the probability for the class labeled as $y = 1$. This function satisfies the conditions $\lim_{p\to 0} l(0, p) = 0$ and $\lim_{p\to 1} l(1, p) = 0$.

In GBDT case, the objective of iteration $t+1$ ($t\ge 0$) is given as follows 
\begin{equation}
\label{e.objfun}
\begin{split}
    \mathcal{L}^{t+1} & = \sum_{i=1}^{n}l(y_i, p^{t+1}_i),\\
                    & = \sum_{i=1}^{n}l(y_i, S(z^{t+1}_i)),\\
                    & = \sum_{i=1}^{n}l(y_i, S(z^{t}_i+\alpha f_{t+1}(\mathbf{x}_i))),
\end{split}
\end{equation}
where $f_{t+1}$ is a new tree , $\alpha$ is the learning rate,
$z^{t}_i = z^{0}_i+\alpha \sum_{j=1}^{t}f_{j}(\mathbf{x}_i)$ is model's raw prediction, and $p^{t+1}_i = S(z^{t+1}_i) = 1/(1+e^{-z^{t+1}_i})$ is obtained by applying the Sigmoid function.
It's worth noting that $z^{0}_i$ is referred to as the initial prediction. Typically, we set $z^{0}_i$ to zero for all $i=1,2,\cdots,n$, as this selection has minimal impact on the final results.

For notational convenience, we define
\begin{equation}
\label{e.binary}
\hat{p} = 
\begin{cases}
p, & y=1,\\
1-p, & y=0, 
\end{cases}
\end{equation}
$\hat{p}$ is the probability for the ground-truth class. Therefore, we can rewrite $l(y, p)$ as $l(1, \hat{p})$ ($l(\hat{p})$ for simplicity) and have $\lim\limits_{\hat{p}\to 1} l(\hat{p})=0$,  $\forall y \in \{0, 1\}$.

For XGBoost and LightGBM, Newton's method is employed to optimize the regularized objective, which has the following formulation:
\begin{equation}
\label{e.objective}
    \mathcal{\widetilde{L}}^{t+1} = \sum\limits_{i=1}^{n}[g^{t}_i f_{t+1}(\mathbf{x}_i)+\frac{1}{2}h^{t}_i f_{t+1}(\mathbf{x}_i)^2] + \Omega(f_{t+1}),\\
\end{equation}
where $\Omega(f_{t+1})$ is a regularization parameter, $g^{t}_i$ is the gradient, and $h^{t}_i$ is the Hessian defined as follows
\begin{equation}
\label{e.binary_grad}
g^{t}_i = \frac{\partial l}{\partial z^{t}_i} = \frac{\partial l}{\partial \hat{p}^{t}_i}\frac{\partial \hat{p}^{t}_i}{\partial p^{t}_i}\frac{\partial p^{t}_i}{\partial z^{t}_i} = 
\frac{\partial l}{\partial \hat{p}^{t}_i}(2y_i-1)(\hat{p}^{t}_i(1-\hat{p}^{t}_i))
\end{equation}

\begin{equation}
\label{e.binary_hess}
h^{t}_i = \frac{\partial^2 l}{\partial (z^{t}_i)^2} = \frac{\partial^2 l}{\partial (\hat{p}^{t}_i)^2}(\hat{p}^{t}_i(1-\hat{p}^{t}_i))^2
+\frac{\partial l}{\partial \hat{p}^{t}_i}(\hat{p}^{t}_i(1-\hat{p}^{t}_i)(1-2\hat{p}^{t}_i)).
\end{equation}
Here we use the fact  that $\frac{\partial p^{t}_i}{\partial z^{t}_i} = p^{t}_i(1-p^{t}_i) = \hat{p}^{t}_i(1-\hat{p}^{t}_i)$.

For a fixed tree structure, $f_{t+1}(\mathbf{x})$ is a piecewise constant function and can be written as $f_{t+1}(\mathbf{x}) =\sum_{j=1}^{J}w_{j} I_{j}$, where $I_j$ is the instance set of leaf $j$. Let $\Omega(f_{t+1})=\frac{1}{2}\lambda\sum_{j=1}^{J}w_j^2$, $\lambda \ge 0$,  \eqref{e.objective} can be rewritten as follows:
\begin{equation}
\label{e.objective2}
\begin{split}
    \mathcal{\widetilde{L}}^{t+1} &= \sum_{i=1}^{n}[g^{t}_i f_{t+1}(\mathbf{x}_i)+\frac{1}{2}h^{t}_i f_{t+1}(\mathbf{x}_i)^2] + \Omega(f_{t+1}),\\
    & = \sum_{j=1}^{J}[(\sum_{i\in I_j} g^{t}_i)w_j + \frac{1}{2}( \sum_{i\in I_j} h^{t}_i + \lambda)w_j^2],
\end{split} 
\end{equation}

Since $f(\mathbf{x})$  is separable with respect to each leaf, we can calculate the optimal weight $w^{*}_j$ of leaf $j$ by
\begin{equation}
\label{e.optweight}
w^{*}_j = \frac{-\sum_{i\in I_j}g_{i}^{t}}{\sum_{i\in I_j}h_{i}^{t}+\lambda},
\end{equation}
and compute the corresponding optimal objective by
\begin{equation}
\label{e.optobj}
\mathcal{\widetilde{L}}^{*}_j = -\frac{1}{2}\frac{(\sum_{i\in I_j}g_{i}^{t})^2}{\sum_{i\in I_j}h_{i}^{t}+\lambda}.
\end{equation}

This score is like the impurity score, hence we employ the equation 
\begin{equation}
\label{e.split}
gain = \frac{1}{2}\Bigg[\frac{(\sum_{i\in I_{L}}g_{i})^2}{\sum_{i\in I_{L}}h_{i}+\lambda}+\frac{(\sum_{i\in I_{R}}g_{i})^2}{\sum_{i\in I_{R}}h_{i}+\lambda}-\frac{(\sum_{i\in I}g_{i})^2}{\sum_{i\in I}h_{i}+\lambda}\Bigg] \nonumber
\end{equation}
to determine whether a tree should be grown and to find the split feature and split value.
Here $I_{L}$ and $I_{R}$ are the instance sets of left and right nodes after the split, and $I = I_L \cup I_R$ is the sample set of the father node.

In practice, three rules control the leaf split and mitigate the risk of overfitting:
\begin{itemize}
    \item \textbf{Number of samples}: If the number of samples within a leaf is insufficient, then we stop the leaf split.
    \item \textbf{Sum of Hessian}: If the sum of hessian $\sum_{i\in I_{j}}h_{i}^{t}$ in a leaf is below a small threshold, specifically $\sum_{i\in I_{j}}h_{i}^{t}<\epsilon$, we stop the growth.
    \item \textbf{Loss reduction}: If the first two conditions are met, but $gain$ falls below a threshold $\delta$, we also stop the growth process to prevent overfitting. 
\end{itemize}


\subsection{Convexity Analysis}

From Eq.~\eqref{e.optweight}  and the split rules, it can be seen that we don't need all the Hessian a leaf to be positive. Instead, we only require $\sum_{i\in I} h_{i}\ge \epsilon$, which tells us that certain $h_i$ can be non-positive and offers the potential to employ nonconvex loss.

In this subsection, we try to answer the following two questions:
\begin{enumerate}
    \item \textbf{What are the minimum requirements for the convexity of the Hessian in the model, that is, the necessary condition?}
    \item \textbf{What is the impact of the non-positive Hessian on the model?}
\end{enumerate}

\subsubsection{\textbf{Necessary condition}}

As per Eq.~\eqref{e.binary_hess}, it's apparent that Hessian $h$ is bound by the probability $\hat{p}$. Consequently, we define a function $h$ in terms of $\hat{p}$, represented as $H(\hat{p})$. 
These two observations lead to the following theorems, providing a quantitative description.

\begin{theorem}[Convexity for Binary Classification \uppercase\expandafter{\romannumeral1}]
\label{t.prob}
In the case of binary classification, the necessary condition to enable a leaf to split is that the Hessian $h$ should be positive when  $\hat{p} \in [0.5,1)$.
\end{theorem}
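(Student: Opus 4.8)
The plan is to derive the condition directly from the leaf-splitting criterion and the structure of $H(\hat{p})$ given in \eqref{e.binary_hess}. Recall from \eqref{e.optweight} and the split rules that a leaf with instance set $I$ can be grown only when its aggregated Hessian satisfies $\sum_{i\in I} h_i \ge \epsilon$ for the prescribed threshold $\epsilon>0$; in particular this forces $\sum_{i\in I} h_i > 0$. Writing each term as $h_i = H(\hat{p}_i)$, the whole question reduces to identifying the range of $\hat{p}$ on which $H$ must be kept positive so that such splits remain possible whenever the model is still learning.

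First I would isolate the regime of $\hat{p}$ that genuinely controls learning. Since $\hat{p}$ is the probability assigned to the ground-truth class, a sample is classified correctly exactly when $\hat{p} \ge 0.5$, and a well-behaved boosting iterate drives these correctly handled samples toward $\hat{p} \to 1$. As training proceeds, leaves therefore tend to collect samples whose $\hat{p}$ lies in $[0.5,1)$, and in the idealized (e.g.\ noise-free) limit a leaf may become homogeneous, containing samples that share a common value $\hat{p}_0 \in [0.5,1)$. For such a leaf, $\sum_{i\in I} h_i = |I|\,H(\hat{p}_0)$, so the split condition is feasible if and only if $H(\hat{p}_0) > 0$. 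Because $\hat{p}_0$ can realize any value in $[0.5,1)$, feasibility for every such leaf requires $H(\hat{p}) > 0$ throughout $[0.5,1)$; conversely, if $H$ were non-positive at some $\hat{p}_0 \in [0.5,1)$, a leaf concentrated near $\hat{p}_0$ could not be split, stalling the model precisely on the samples it is supposed to keep refining. This establishes necessity.

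Next I would pin down the two endpoints so that the interval is exactly $[0.5,1)$ rather than something larger or smaller. The value $\hat{p} = 1$ is excluded because the common factor $\hat{p}(1-\hat{p})$ appearing throughout \eqref{e.binary_hess} vanishes there, forcing $H(1)=0$ and making that point irrelevant to the split condition. The lower boundary $0.5$ is the decision threshold: samples with $\hat{p} < 0.5$ are misclassified and are exactly the ones that label noise and outliers tend to produce, so there is no requirement that $H$ stay positive there. Permitting $H(\hat{p})\le 0$ on $(0,0.5)$ is what allows a nonconvex loss to down-weight such samples and is the source of the robustness we are after. Taken together, these observations show that positivity of $H$ on $[0.5,1)$ is the minimal convexity requirement compatible with a splittable, still-learning leaf.

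The main obstacle I anticipate is the justification of the reduction to a pointwise condition: strictly speaking, the split criterion is a statement about sums over heterogeneous leaves, not about $H$ at a single $\hat{p}$. The argument therefore hinges on exhibiting, or arguing the inevitability of, leaf configurations concentrated at an arbitrary $\hat{p}_0 \in [0.5,1)$ during training, so that the aggregate condition $\sum_{i\in I} h_i>0$ collapses to $H(\hat{p}_0)>0$. Making this limiting or worst-case behavior precise, rather than merely plausible, is the delicate part; once it is accepted, the endpoint analysis and the sign bookkeeping coming from \eqref{e.binary_hess} are routine.
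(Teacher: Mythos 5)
There is a genuine gap, and it is precisely the one you flagged yourself at the end: your argument reduces the aggregate split criterion $\sum_{i\in I} h_i \ge \epsilon$ to the pointwise condition $H(\hat{p}_0)>0$ by asserting that, during training, leaves ``tend to'' become homogeneous at an arbitrary $\hat{p}_0\in[0.5,1)$. That claim is never established --- heterogeneous leaves can have a positive Hessian sum even when $H$ is negative at some member's $\hat{p}$, so without producing an actual configuration that forces the pointwise condition, necessity does not follow. The paper closes this gap with a much simpler device that you did not take: since a necessary condition must hold for \emph{every} dataset, it suffices to exhibit one on which the aggregate condition is automatically pointwise, namely a dataset with a single instance. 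With one sample per leaf, $\sum_{i\in I}h_i = H(\hat{p}^t)$, and the requirement that every new tree be constructible becomes exactly $H(\hat{p}^t)>0$ for all $t$. This is the rigorous version of your ``homogeneous leaf in the idealized limit,'' and it costs nothing to state.

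The second difference concerns how the interval $[0.5,1)$ itself is derived. You locate the left endpoint at $0.5$ by appealing to the decision threshold (correct versus misclassified samples), which is a heuristic about robustness, not a derivation. In the paper the endpoint comes from the initialization: $z^0=0$ gives $\hat{p}^0=S(0)=0.5$, the iterates must satisfy $\hat{p}^t\to 1$, and for any $\hat{p}\in[\hat{p}^0,1)$ a suitable learning rate $\alpha$ realizes $\hat{p}=S(z^0+\alpha f_1)$, so every point of $[0.5,1)$ is attainable as an iterate and positivity of $H$ is needed at each of them. (The paper also invokes a monotone-subsequence normalization of $\{\hat{p}^t\}$, and notes that because monotonicity may fail in practice the condition is only necessary.) Your observation that $H(1)=0$ forces the exclusion of the right endpoint is correct and consistent with the paper, but the core of the proof --- the single-instance reduction and the initialization-plus-learning-rate argument that sweeps out $[0.5,1)$ --- is missing from your attempt, so as written it is an informed plausibility argument rather than a proof.
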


\begin{proof}
Consider the simplest case with only one instance, the iteration of GBDT tells us we need to learn a sequence of $\{\hat{p}^t\}_{t=0}^{+\infty}$ such that $\lim\limits_{t \to +\infty} \hat{p}^t=1$. Due to the fact that every convergent sequence has a monotonic subsequence, we can assume that the sequence $\{\hat{p}^t\}_{t=0}^{+\infty}$ is monotonic in this proof.

To create a new tree, it is crucial that all values of $h^t$ remain positive throughout the entire iteration, meaning $h^t > 0$ for all $t$.
It is noteworthy that for every $\hat{p}$ in $[\hat{p}^0, 1)$, it is possible to choose a proper learning rate $\alpha$ such that $\hat{p} = S(z^{0}+\alpha f_{1})$. Consequently, $\forall \hat{p}^t \in [\hat{p}^0, 1)$, we can affirm that $H(\hat{p}^t)>0$.
Since $\hat{p}^0=S(z^{0}=0)=0.5$, we establish the convexity of Hessian.

Since $\{\hat{p}^t\}_{t=0}^{+\infty}$ may not consistently exhibit monotonic behavior in practical scenarios, this condition is a necessary condition. 
\end{proof}

Given that $p = S(z)$ is the estimated probability for label $y=1$, based on Theorem \ref{t.prob} and Eq.~\eqref{e.binary}, we have the following theoretical result:
\begin{theorem}[Convexity for Binary Classification \uppercase\expandafter{\romannumeral2}]
\label{t.value}
For a sample with label $y=1$, the Hessian $h$ should have positive values when learners' prediction $z \geq 0$. On the other hand, when the label is $y=0$,  $h$ should be positive when $z \leq 0$.
\end{theorem}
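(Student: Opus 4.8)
The plan is to derive Theorem \ref{t.value} directly from Theorem \ref{t.prob} by translating the latter's condition on the ground-truth probability $\hat{p}$ into an equivalent condition on the raw prediction $z$, handling the two label cases separately. The only ingredients needed are the definition of $\hat{p}$ in Eq.~\eqref{e.binary} together with two elementary facts about the sigmoid $S$: it is strictly increasing, and $S(0)=0.5$. Theorem \ref{t.prob} already guarantees $h=H(\hat{p})>0$ for $\hat{p}\in[0.5,1)$, so everything reduces to matching ranges under the change of variables $\hat{p}\leftrightarrow z$.

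First I would treat the case $y=1$. Here $\hat{p}=p=S(z)$, so $\hat{p}$ is an increasing function of $z$. Since $S(0)=0.5$ and $S(z)\to 1$ as $z\to+\infty$, the range $\hat{p}\in[0.5,1)$ corresponds exactly to $z\in[0,+\infty)$, i.e.\ $z\ge 0$. Applying Theorem \ref{t.prob} on this range yields $h>0$ whenever $z\ge 0$, which is the first claim.

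Next I would treat $y=0$. Now $\hat{p}=1-p=1-S(z)$, which is a \emph{decreasing} function of $z$, so the inequalities flip. The condition $\hat{p}\ge 0.5$ becomes $S(z)\le 0.5$, i.e.\ $z\le 0$, while $\hat{p}<1$ is equivalent to $S(z)>0$, which holds for every finite $z$. Hence $\hat{p}\in[0.5,1)$ corresponds to $z\in(-\infty,0]$, and Theorem \ref{t.prob} again forces $h>0$ on this range, giving the second claim.

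Because the argument is just a monotone reparametrization from $\hat{p}$ to $z$, there is no substantive obstacle. The only points requiring care are the orientation of the inequalities in the $y=0$ case, where $z\mapsto\hat{p}$ reverses direction, and the correct pairing of endpoints: the closed boundary $\hat{p}=0.5$ maps to $z=0$, whereas the open boundary $\hat{p}=1$ is pushed to $z=\pm\infty$ and so imposes no finite constraint.
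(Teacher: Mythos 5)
Your proposal is correct and follows essentially the same route as the paper's own proof: both deduce the result from Theorem \ref{t.prob} by translating the condition $\hat{p}\in[0.5,1)$ into a condition on $z$ via Eq.~\eqref{e.binary} and the sigmoid, splitting on the label ($\hat{p}=p$ gives $z\ge 0$ for $y=1$; $\hat{p}=1-p$ gives $z\le 0$ for $y=0$). Your write-up is merely more explicit about the monotonicity of $S$, the endpoint $S(0)=0.5$, and the inequality reversal in the $y=0$ case, which the paper states tersely.
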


\begin{proof}
For a sample with label $y=1$, $\hat{p} \in [0.5, 1)$ means $p \in [0.5,1)$ and $z\ge 0$. 
On the other hand, when the label is $y=0$,  $\hat{p} \in [0.5, 1)$ means $p \in (0, 0.5]$, resulting in $z\le 0$.
\end{proof}

In situations involving multiple output variables, the current GBDT generates separate binary trees, with each tree corresponding to one of the output variables. Consequently, the convexity of the Hessian in the context of multi-class classification can be deduced based on the previously established theorems.
Assuming the GBDT output is represented as $\mathbf{z} = (z_1, z_2, \dots, z_C)$, where $z_k$ denotes the output of the $k$th tree and $C$ represents the number of classes, the theoretical result for multi-class classification is as follows:

\begin{theorem}[Convexity for Multi-class Classification]
\label{t.mvalue}
For a sample with label $y=k$, the Hessian $h$ should have positive values when learners' prediction $z_k \geq 0$. On the other hand, $h$ should be positive when $z_j \leq 0, j\neq k$. 
\end{theorem}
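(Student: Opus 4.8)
The plan is to reduce the multi-class statement directly to Theorem~\ref{t.value} by exploiting the one-vs-all decomposition that the GBDT uses. Recall that in this framework the model trains $C$ independent binary trees, one per class, where the $k$th tree produces the output $z_k$ and solves the binary problem ``does the sample belong to class $k$?''. The first step is therefore to make precise the binary label that each subproblem assigns to a fixed sample, and to confirm that each tree is trained exactly as in the binary setting of Eqs.~\eqref{e.binary_grad}--\eqref{e.binary_hess}.

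Next, I would fix a sample whose true label is $y=k$ and determine its role in each of the $C$ binary subproblems. In the subproblem associated with tree $k$, this sample is a positive instance, i.e.\ it carries the binary label $1$; in every other subproblem $j\neq k$ it is a negative instance carrying the binary label $0$. With these assignments in hand, Theorem~\ref{t.value} applies verbatim to each subproblem: applying it to tree $k$ (binary label $1$) shows that its Hessian $h$ must be positive whenever $z_k\ge 0$, while applying it to each tree $j\neq k$ (binary label $0$) shows that the corresponding Hessian must be positive whenever $z_j\le 0$. Collecting these conditions across all $C$ trees yields precisely the claimed statement.

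The only delicate point, and the step I would be most careful about, is verifying that the one-vs-all construction genuinely decouples the trees, so that each tree's Hessian is a function of that tree's own output alone; otherwise Theorem~\ref{t.value}, which is a single-output result, could not be invoked separately per class. I would make this explicit by noting that the training objective decomposes additively over the $C$ binary problems and that the Sigmoid feeding tree $k$ depends only on the corresponding raw prediction $z_k$, so the gradient and Hessian formulas \eqref{e.binary_grad}--\eqref{e.binary_hess} hold independently for each $k$. Once this decoupling is established, the reduction is immediate, and I would expect the overall argument to be quite short, being essentially a bookkeeping application of the binary result to each of the $C$ per-class problems.
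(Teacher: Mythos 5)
Your proposal is correct and follows exactly the argument the paper intends: the paper states Theorem~\ref{t.mvalue} as an immediate consequence of the one-vs-all decomposition, applying Theorem~\ref{t.value} per class with the sample treated as a positive instance for tree $k$ and a negative instance for every tree $j\neq k$. Your added care about the per-tree decoupling (each sigmoid depending only on its own raw output $z_k$) is a sound observation that the paper leaves implicit, but it does not change the route.
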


\subsubsection{\textbf{Impact of non-positive Hessian}}
Based on the preceding discussion, we can conclude that the sign of the Hessian acts as a regularizer through the following two key aspects: 
\begin{itemize}
    \item \textbf{Sum of Hessian}: Some non-positive Hessian values make it easier for $\sum_{i\in I}h_i$ to become less than $\epsilon$, causing the leaf splitting process to terminate early.
    \item \textbf{Loss reduction}: The reduction in $\sum_{i\in I}h_i$ may lead $gain$ below a threshold $\delta$, which in turn can prohibit the growth of the tree.
\end{itemize}

The first one is obvious. For the second one, since the $gain$ is also affected by the sum of the gradient $\sum_{i\in I}g_{i}$ and $\lambda$, it is hard to say that non-positive Hessian must reduce the $gain$. Instead, we give some examples, which are in Appendix \ref{a.existence}, to show the existence of non-positive Hessian can indeed reduce the loss.

\subsection{Robust-GBDT Model}
We first introduce the definition of Robust-GBDT based on the above discussions:
\begin{definition}[Robust-GBDT]
We refer to the GBDT model based on Newton's method and equipped with a robust loss function that meets the necessary condition as \textbf{Robust-GBDT}.
\end{definition}

While there are several robust loss functions available, not all of them are a good fit for GBDT. That is because GBDT doesn't offer the same level of flexibility as deep learning when it comes to designed loss functions. Here we list the convexity of a few robust loss functions that are commonly used with the GBDT model:

\begin{itemize}
    \item Mean Absolute Error \cite{ghosh2017robust}: $l_{MAE}= 1-\hat{p}$
    \item Generalized Cross Entropy \cite{zhang2018generalized}: $l_{GCE} = \frac{1-\hat{p}^q}{q}, ~q\in(0,1]$
    \item Symmetric Cross Entropy \cite{wang2019symmetric}: $l_{SCE} = -\alpha log(\hat{p})+\beta(1-\hat{p})$
    \item Normalized Cross Entropy \cite{ma2020normalized}: $l_{NCE} = \frac{log(\hat{p})}{log(\hat{p})+log(1-\hat{p})}$
\end{itemize}

Fig.~\ref{f.robust_loss}(a) illustrates the graphical representation of different loss functions' Hessian values. It is discernible from the plot that GCE and SCE conform to the conditions outlined in Theorem \ref{t.prob}. Notably, SCE exhibits partial robustness to noise because only the MAE term is robust to noise.
In contrast, both MAE and NCE exhibit central symmetry. Consequently, their Hessians attain a value of 0 at $\hat{p}=0.5$, rendering them non-compliant with the specified condition.

\begin{figure*}[!t]
\centering
\subfigure[\small Hessian $H(\hat{p})$ of different losses]{\includegraphics[width=2.5in]{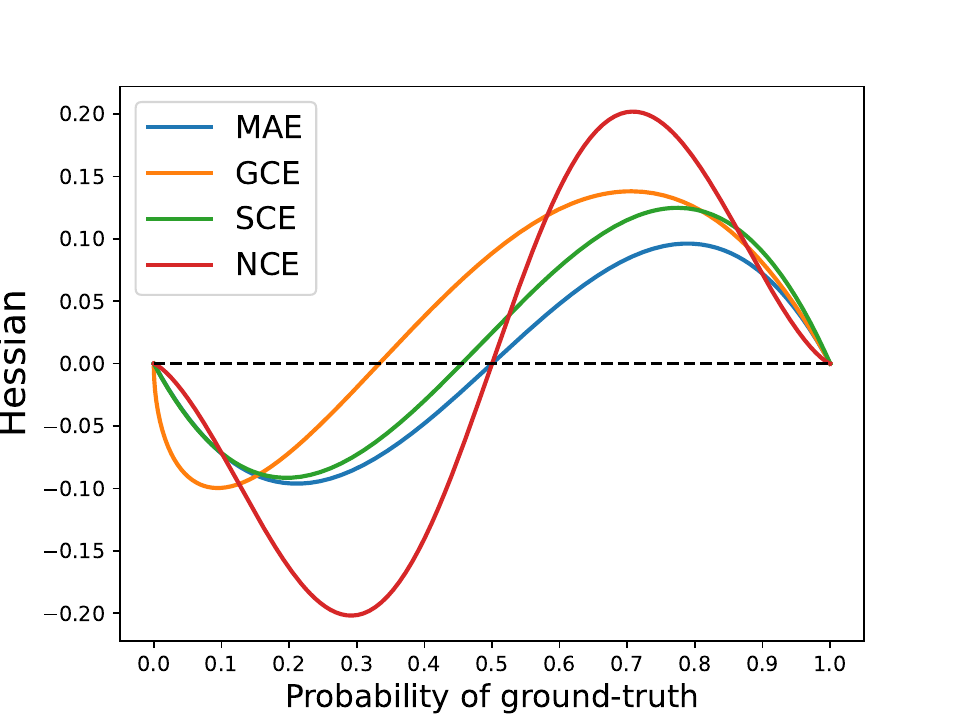}}
\subfigure[\small Hessian $H(\hat{p})$ of RFL]{\includegraphics[width=2.5in]{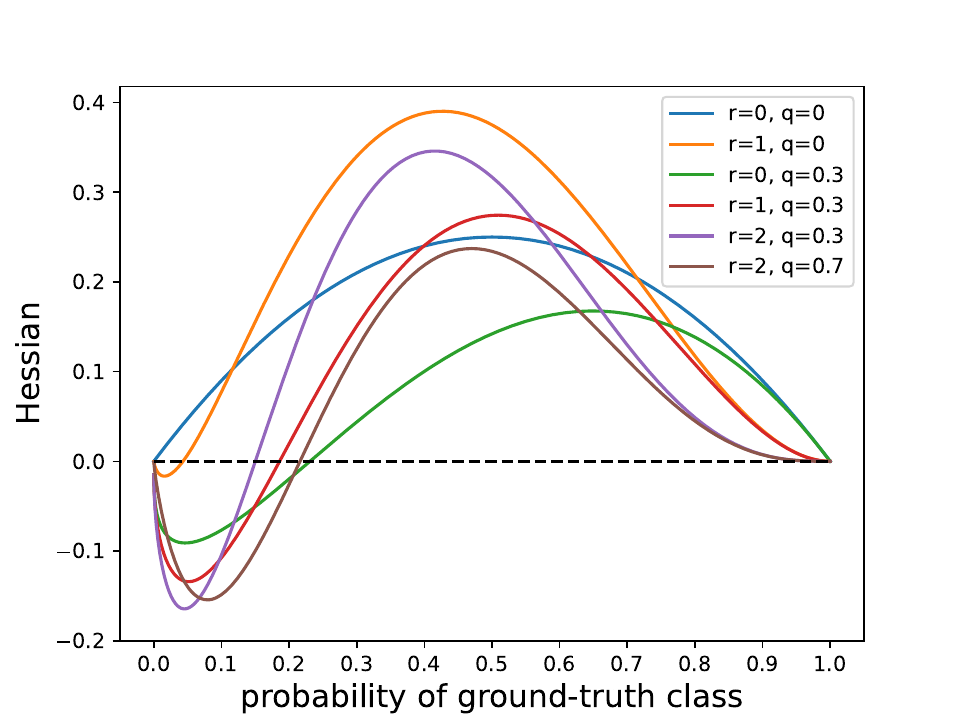}}
\caption{Different Robust Losses and their corresponding Hessian.}
\label{f.robust_loss}
\end{figure*}

Although GCE satisfies all the conditions, it does not account for class imbalance. Therefore, we introduce a novel loss function, termed the \textbf{Robust Focal Loss (RFL)}.
The RFL is defined as follows:
\begin{equation}
\label{e.rfl}
l_{RFL} = (1-\hat{p})^r\frac{1-\hat{p}^q}{q},
\end{equation}
where $r$ and $q$ are two hyperparameters, $r\ge 0$, $q\in (0,1)$. The loss is a combination of GCE and Focal Loss (FL) \cite{lin2017focal} $l_{FL} = -(1-\hat{p})^{r}\log{\hat{p}}$, see Fig.~\ref{f.robust_loss} (b).
When $r=0$, RFL degenerates into GCE; when $r>0$, $q \to 0$, RFL transforms into FL; and when $r=0$, $q\to 0$,  RFL is equal to CCE.

\begin{remark}
In order to enhance the adaptability of MAE and NCE to GBDT, we can introduce perturbations to $\hat{p}$  when $\hat{p} \le 0.5$.  For instance, one approach involves substituting $\hat{p}$ with $\hat{p} + \eta$, where $\eta$ is a small positive scalar. 
To make SCE global robustness, a clipping operation can be employed on $\hat{p}$, replacing it by $\max(\hat{p}, \eta)$.
Additionally,  we can add the term $(1-\hat{p})^r$ to make these losses account for class imbalance.
\end{remark}

\section{Experiments}
\label{s:exps}
\subsection{Experiments Setup}

\subsubsection{Datasets} 
We conducted comprehensive comparisons on a dataset comprising 25 datasets, specifically comprising 15 binary and 10 multi-class datasets.
The binary datasets are from the imbalanced-learn \footnote{\url{https://imbalanced-learn.org/}}
and the OpenML\footnote{\url{https://www.openml.org/}} repository, 
and the multi-class datasets come from the KEEL\footnote{\url{https://sci2s.ugr.es/keel/datasets.php}}
and the OpenML repository.
To assess the robustness of different algorithms, we introduced label noise by randomly selecting a certain proportion of the training data and flipping their labels. We varied the noise levels from 0\% to 40\% in increments of 10\%.
It's essential to note that we did not apply any specialized preprocessing techniques, including missing value imputation, to the datasets.
For more details, we refer the reader to Appendix \ref{a.data}.

\subsubsection{Benchmark models}
We introduce two models, Robust-XGB (RXGB) and Robust-LGB (RLGB), which are built upon XGBoost and LightGBM, respectively. We then conduct a comparative analysis of these models against six commonly employed ensemble baselines, namely AdaBoost (AB) \cite{freund1997decision}, LogitBoost (LB) \cite{friedman2000additive}, Bagging (BG) \cite{breiman1996bagging}, RandomForest (RF) \cite{breiman2001random},
XGBoost (XGB) \cite{chen2016xgboost}, LightGBM (LGB) \cite{ke2017lightgbm}. All these models can be applied to both binary and multi-class classification tasks.

\subsubsection{Evaluation metrics} 
For the binary classification tasks, we utilize \textbf{Area Under the Precision-Recall Curve (AUCPR)} as the evaluation metric, and for multi-classification problems, we chose \textbf{Accuracy} as the metric.

\subsubsection{Training procedure} 
For all datasets, we randomly hold out 80$\%$ of the instances as the training set and the rest as the test set.
We introduce different levels of noise, ranging from 0\% to 40\%, into the training data and then optimize the parameters using this noisy data.
Once we have determined the optimal parameters, we make predictions on the test set.
The entire process is repeated 5 times and the average evaluation score is reported.

\subsection{Performance Comparison}

In this subsection, we examine the robustness of various ensemble methods in the context of binary classification (15 datasets) and multi-class classification (10 datasets).

\subsubsection{Binary Classification}
Fig.~\ref{f.binary} illustrates the testing AUCPR as it varies with different levels of noise. The figure unmistakably highlights that RandomForest, Bagging, AdaBoost, and Logboost consistently underperform when compared to the four GBDT-based models. This performance gap becomes particularly pronounced when the dataset size is large, as evidenced by the PR and SA datasets. Furthermore, it's noteworthy that some of these models fail to yield results when faced with datasets containing missing values (dataset CR) or high-dimensional data (dataset SA).
Additionally, we observe that RFL, incorporated into imbalanced learning and label noise learning, enhances the robustness of Robust-XGB and Robust-LGB, resulting in superior performance compared to their non-robust counterparts, XGBoost and LightGBM. Notably, these two robust methods exhibit higher AUCPR scores even at noise levels as low as 0\%, and this advantage persists up to a noise level of 40\%.

To provide a comprehensive assessment, we have ranked the eight methods on a scale of 1 to 8, with 1 indicating the highest AUCPR score and 8 representing the lowest. Our evaluation encompasses a total of 75 training datasets derived from 15 diverse datasets and across five noise levels.
We calculate the sum of the datasets in which a particular algorithm is ranked within the top-\textit{n}. This information is visually represented in Fig.~\ref{f.rank_binary}. 
Here, the top-\textit{n} means that for a given algorithm, there are y-axis value datasets in which the algorithm achieves top-\textit{n} performance.
Additionally, we calculate the average rank based on 5 tests for each dataset-model combination, and the results are provided in Table~\ref{T.binary_rank}.
It is evident from the figure that Robust-GBDT significantly outperforms other competing models. This clear superiority confirms its enhanced resistance to label noise and class imbalance.

\begin{figure*}[!t]
\centering
\includegraphics[scale=0.35]{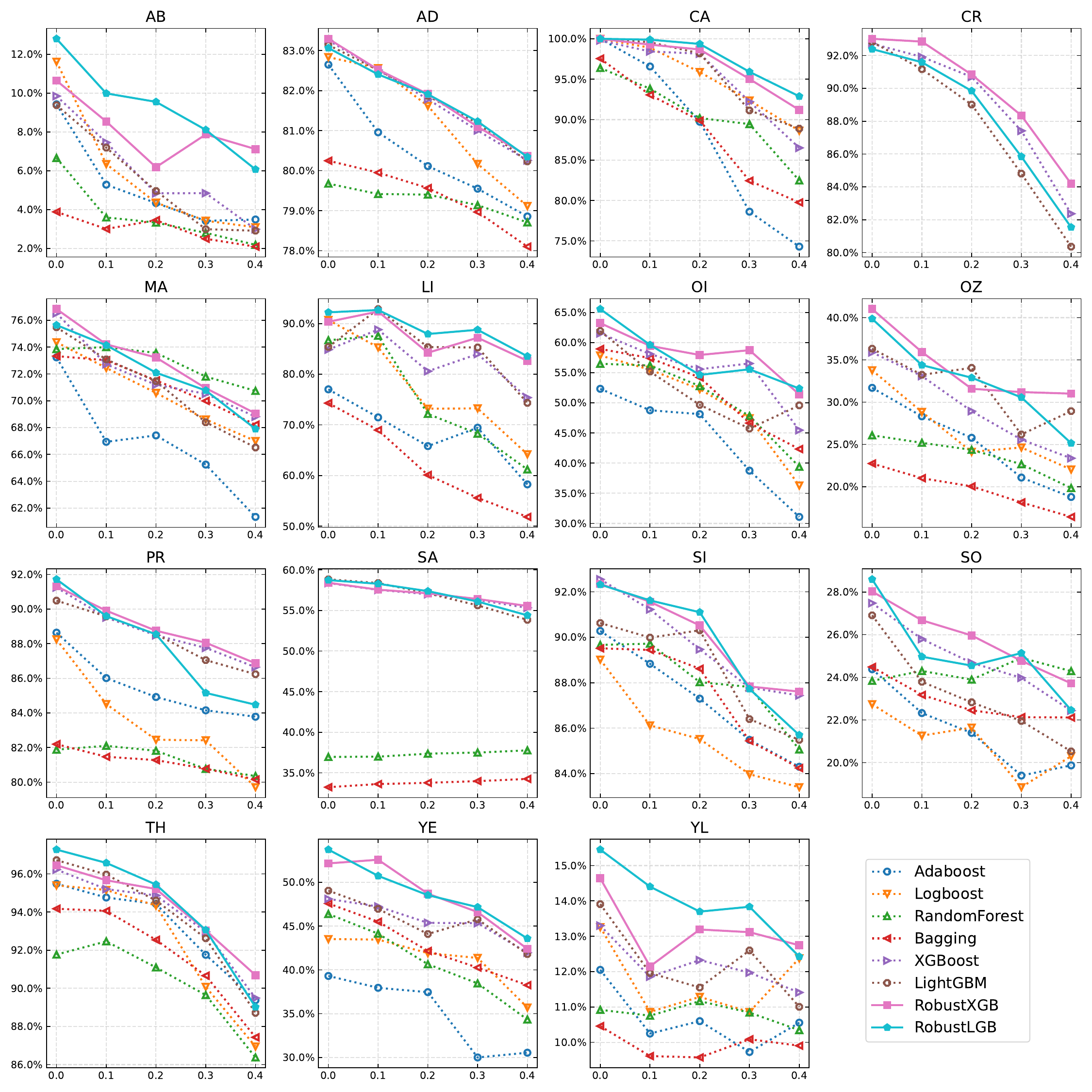}
\caption{Test AUCPR curves that change with different noise levels for the 8 ensemble methods on 15 binary classification datasets. Within each subfigure, the x-axis represents the noise level, while the y-axis corresponds to the AUCPR.}
\label{f.binary}
\end{figure*}

\begin{figure*}[!t]
\centering
\includegraphics[scale=0.35]{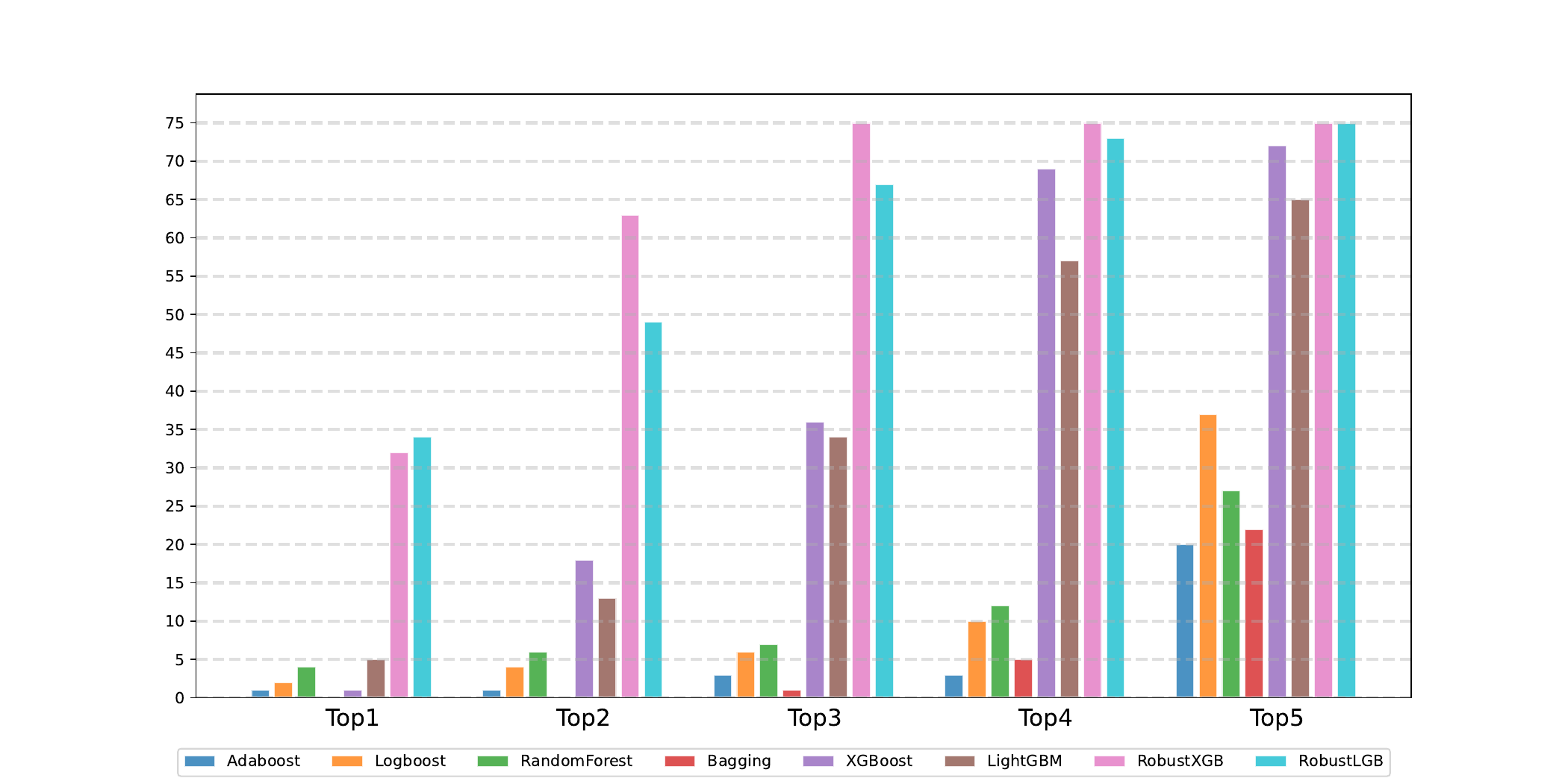}
\caption{Statistics of top-$n$ rank values on 75 datasets derived from 15 binary datasets with five noise levels. The y-axis represents the dataset number.}
\label{f.rank_binary}
\end{figure*}

\begin{table}[!ht]
\normalsize
\renewcommand\arraystretch{1.1}
\centering
\label{T.binary_rank}
\begin{adjustbox}{width=0.9\textwidth}
\begin{tabular}{lcccccc|cc}
    \toprule[2pt]
        Dataset & AB & LB & RF & BG  & XGB & LGB &  RXGB & RLGB\\
    \midrule[1.5pt]
        AB & 5.0 & 4.0 &  7.2 &   7.8 &  3.8 &  5.0 &  2.0 & \textbf{1.2} \\
    \hline
        AD & 6.0 & 4.2 & 7.6 & 7.4 & 3.2 & 3.0 & \textbf{1.6} & 2.8\\
    \hline
        CA & 6.2 & 3.4 & 5.8 & 6.4 & 4.0 & 2.8 & 2.0 &  \textbf{1.0}\\
    \hline
        CR & -  & - & - & - & 2.2 & 3.6 &  \textbf{1.0} &3.2\\
    \hline
        MA & 8.0 & 6.2 & 2.4 & 5.0 & 4.0 & 5.4 & \textbf{1.6} & 3.2\\
    \hline
        LI & 6.8 & 4.6 & 5.6 & 8.0 & 4.2 & 3.0 & 2.6 & \textbf{1.2}  \\
    \hline
        OI & 8.0 & 6.0 & 5.4 & 4.8 & 3.0 & 5.4 & \textbf{1.2} & 1.8 \\
    \hline
        OZ & 6.2 & 5.4 & 6.4 & 8.0 & 4.0 & 2.2 & \textbf{1.4} & 2.2\\
    \hline
        PR & 5.0 & 6.4 & 7.2 & 7.4 & 3.0 & 3.0 & \textbf{1.2} & 2.8\\
    \hline
        SA & -  & - &5.0 & 6.0 & 3.2 & 2.4 & \textbf{2.0} & 2.2 \\
    \hline
        SI & 6.2 & 8.0 & 4.8 &6.4 &  2.6 & 4.0 & \textbf{1.6} & 2.4\\
    \hline
        SO & 7.2 & 7.6 &  3.6 &  5.4 &  3.0 & 5.2 & \textbf{1.8} & 2.2\\
    \hline
        TH & 4.8 & 6.2 & 8.0 & 6.6 & 3.2 & 3.4 & 2.2 & \textbf{1.6} \\
    \hline
        YE & 8.0 & 6.2 & 6.6 & 5.2 & 3.4 & 3.6 & 1.6 & \textbf{1.4} \\
    \hline
        YL & 6.8 & 4.6 & 6.4 &  7.8 & 3.8 & 3.6 & 1.8 & \textbf{1.2} \\
    \midrule[1.5pt]
    Average & 6.41 & 5.65 & 5.80 & 6.48 & 3.38 & 3.74 & \textbf{1.73} & 2.02\\

    \bottomrule[2pt]
\end{tabular}
\end{adjustbox}
\caption{Average rank for different models on binary classification datasets. The best results for each dataset are in \large\textbf{bold}.}
\end{table}

\subsubsection{Multi-class Classification}
Fig.~\ref{f.multi} plots the testing accuracy that changes with different levels of noise.

Similarly, due to the high dimensionality issue, it is also infeasible to evaluate Adaboost and Logboost on dataset CV. 
While the advantages of Robust-XGB and Robust-LGB in multi-class classification, as depicted in Fig.~\ref{f.multi}, may not be as pronounced as those observed in binary classification results, the true strength of these methods becomes evident in Fig.~\ref{f.rank_multi} and Table~\ref{T.multi_rank}.
In this context, it becomes apparent that the superior performance of Robust-XGB and Robust-LGB observed in binary classification tasks continues seamlessly into multi-class classification.

\begin{figure*}[!t]
\centering
\includegraphics[scale=0.35]{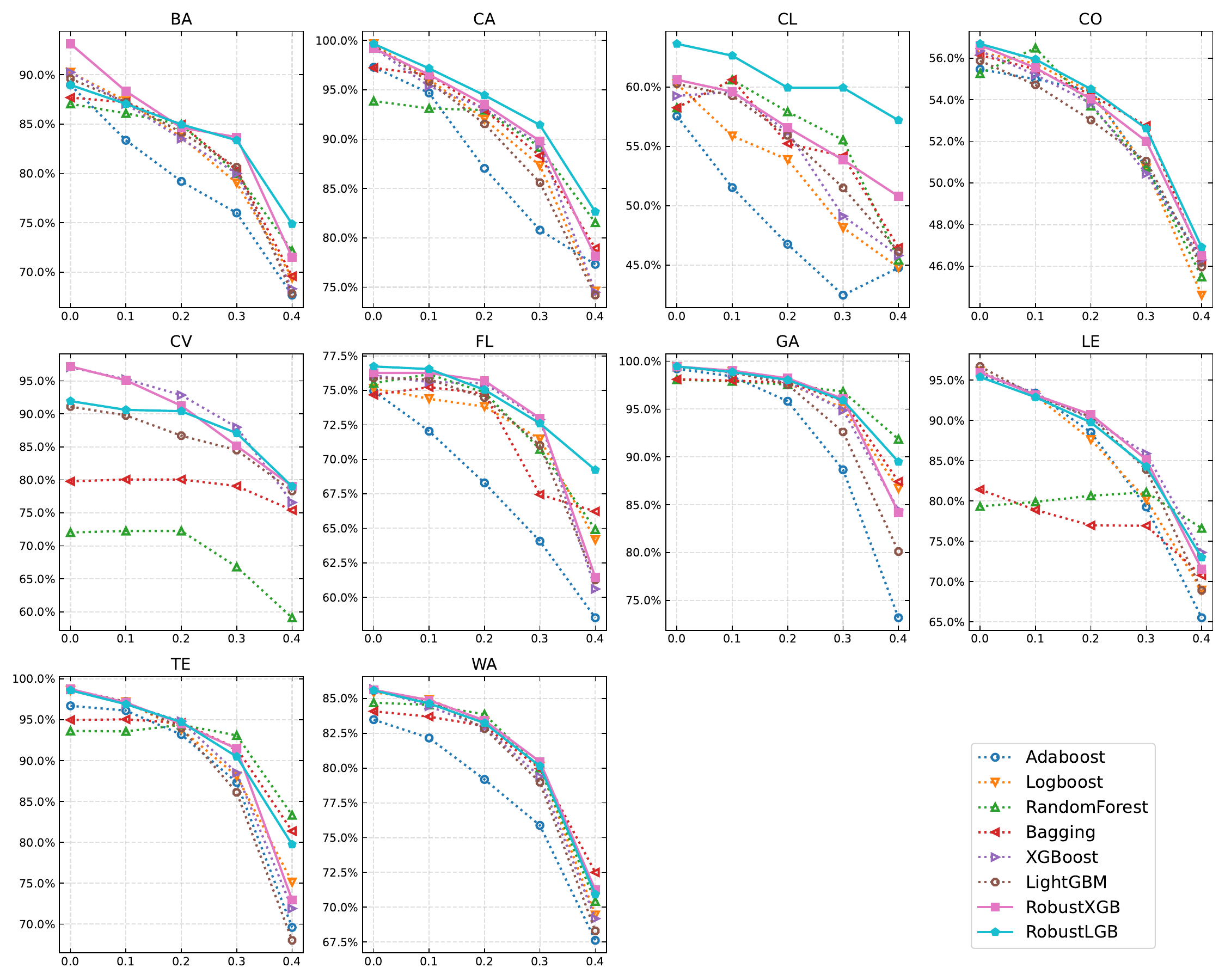}
\caption{Test accuracy scores that change with different noise levels for the 8 ensemble methods on 10 multi-class classification datasets. Within each subfigure, the x-axis represents the noise level, while the y-axis corresponds to the Accuracy.}
\label{f.multi}
\end{figure*}

\begin{figure*}[!t]
\centering
\includegraphics[scale=0.35]{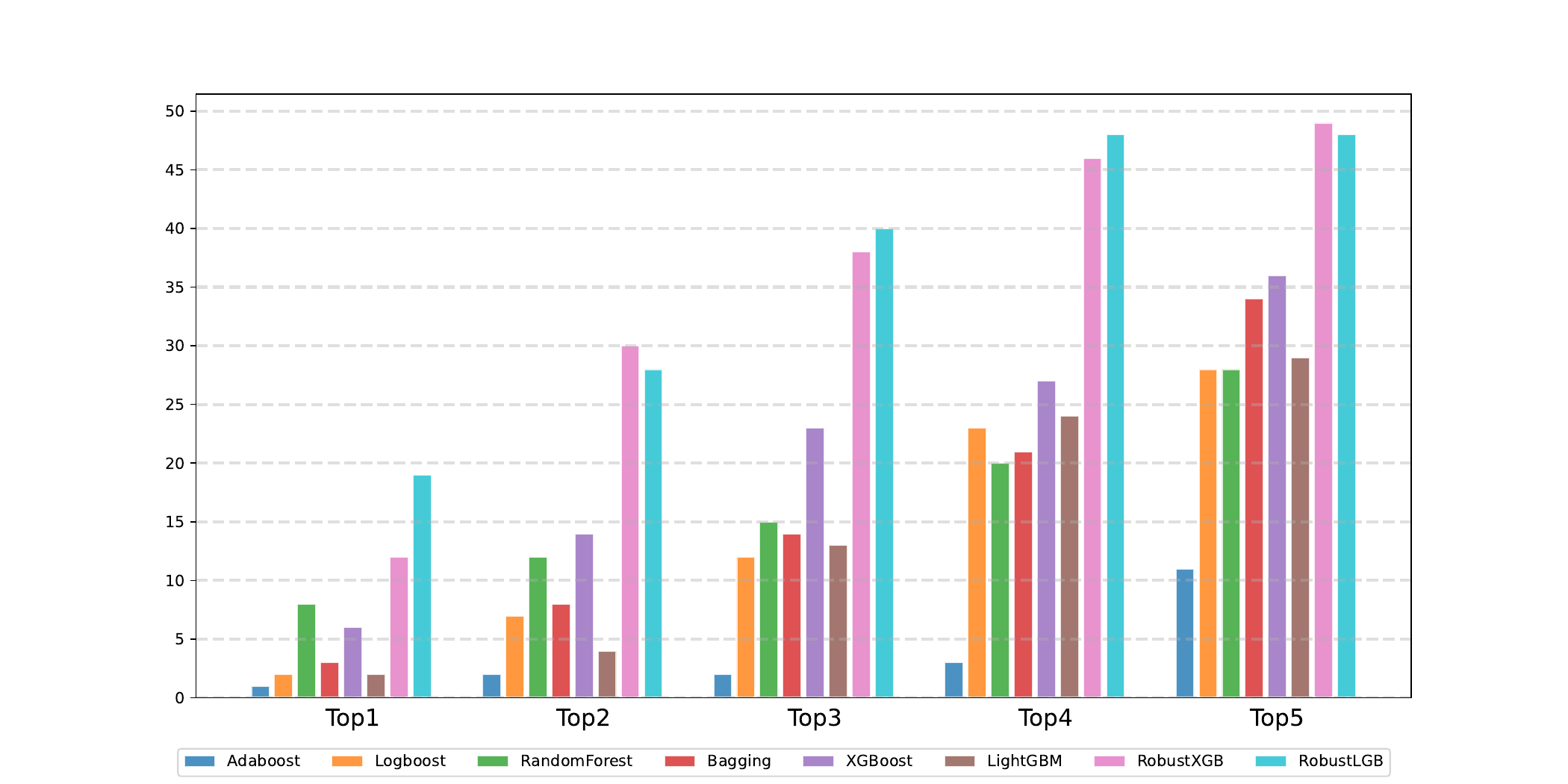}
\caption{Statistics of top-$n$ rank values on 50 datasets derived from 10 multi-class classification datasets with five noise levels. The y-axis represents the dataset number.}
\label{f.rank_multi}
\end{figure*}

\begin{table}[!ht]
\normalsize
\renewcommand\arraystretch{1.1}
\centering
\label{T.multi_rank}
\begin{adjustbox}{width=0.95\textwidth}
\begin{tabular}{lcccccc|cc}
    \toprule[2pt]
        Dataset & AB & LB & RF & BG  & XGB & LGB &  RXGB & RLGB\\
    \midrule[1.5pt]
        BA & 6.2 & 3.8 & 3.8 & 3.4 & 4.4 & 3.8 &  \textbf{1.6} & 2.4\\
    \hline
        CA & 6.4 & 4.4 & 5.0 & 4.0 &  4.4 & 5.6 & 3.0 & \textbf{1.0}\\
    \hline
        CL & 8.0 & 6.4 &  4.0 &  4.0 & 5.0 &  4.6 & 3.0 & \textbf{1.0} \\
    \hline
        CO & 5.4 &  5.0 & 5.6 &  3.8 & 5.2 & 6.4 & 3.2 &  \textbf{1.4}\\
    \hline
        CV & -  & - & 6.0 & 5.0 & \textbf{1.8} & 3.8 & \textbf{1.8} & 2.6\\
    \hline
        FL & 7.8 &  5.6 & 4.2 & 5.6 &  3.8 &  5.0 & 2.2 & \textbf{1.8}\\
    \hline
        GA & 7.0 &  3.8 & 4.8 &  4.8 &  4.2 & 5.6 & 2.4 & \textbf{2.2} \\
    \hline
        LE & 5.2 &  4.8 &  5.6 & 7.2 &  \textbf{2.2} & 3.8 & 2.6 &  4.4\\
    \hline
        TE & 6.4 & 4.6 & 4.2 & 3.8 & 3.6 & 5.2 & \textbf{3.2} &  3.4\\
    \hline
        WA & 7.8 &  3.2 &  3.8 & 4.8 & 5.0 & 5.4 & \textbf{1.8} & 3.2\\
    \midrule[1.5pt]
    Average & 4.48 & 3.24 & 3.13 & 3.09 & 2.64 & 3.28 & 1.67 & \textbf{1.56}\\
    \bottomrule[2pt]
\end{tabular}
\end{adjustbox}
\caption{Average rank for different models on multi-class classification datasets. The best results for each dataset are in \large\textbf{bold}.}
\end{table}

Based on these statistical findings, it becomes evident that Robust-XGB and Robust-LGB consistently outperform other methods, particularly outshining RandomForest, Bagging, AdaBoost, and Logboost. Although XGBoost, LightGBM, Robust-XGB, and Robust-LGB exhibit relatively similar performance in some datasets, an overall assessment reveals the superior performance of Robust-XGB and Robust-LGB.

\subsection{Ablation studies}
We further conduct some ablation experiments to investigate the impact of removing two key factors on the performance of our model. The two factors of interest are:
\begin{itemize}
    \item Factor $(1-\hat{p})^r$: We want to analyze how removing this factor affects model performance.
    \item Parameter $q$: We want to understand the contribution of value $q$ to the model's performance.
\end{itemize}
The experiments are conducted on three binary and three multi-class classification datasets, selected based on their imbalance ratios.

To investigate the influence of the factor $(1-\hat{p})^r$ on the model, we let $r=0$, degenerating RFL into GCE, and retrain the model. To assess the contribution of variable $q$ to the model's performance, we assign 0 to $q$, thereby converting RFL into FL
The results of these ablation experiments are displayed in Fig.~\ref{f.ablation}.

Our observations reveal that the removal of the factor $(1-\hat{p})^r$ from the model results in a small decline in performance when the dataset exhibits a low imbalance ratio. However, this decline becomes much more pronounced when the dataset exhibits a high imbalance ratio. These findings suggest that the factor $(1-\hat{p})^r$ indeed plays a role in enhancing the predictive power of robust loss, particularly in scenarios where the dataset faces pronounced imbalances.

In the ablation model where variable $q$ is altered, an obvious decrease in the model's performance is obtained. Notably, when the dataset is imbalanced, altering variable $q$ has a substantial adverse effect on the model's performance, indicating noise-robust loss can indeed improve the robustness of FL in the model's decision-making process.

\begin{figure}[ht]
\centering
\subfigure[\scriptsize ratio: 383/307]{\includegraphics[width=0.3\textwidth]{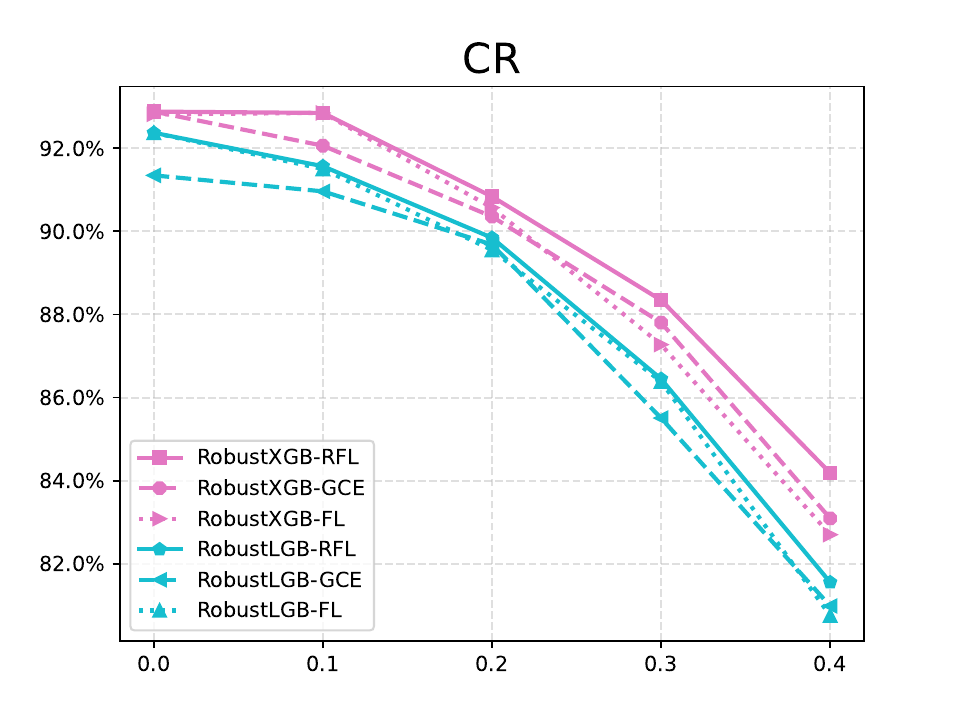}}
\subfigure[\scriptsize ratio: 896/41]{\includegraphics[width=0.3\textwidth]{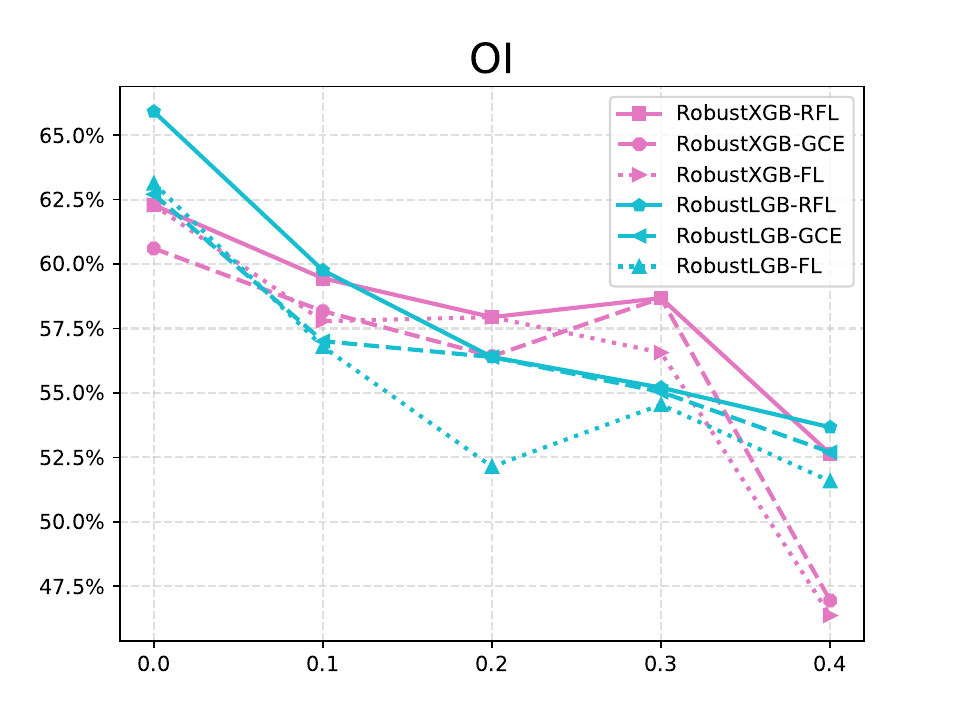}}
\subfigure[\scriptsize ratio: 4145/32]{\includegraphics[width=0.3\textwidth]{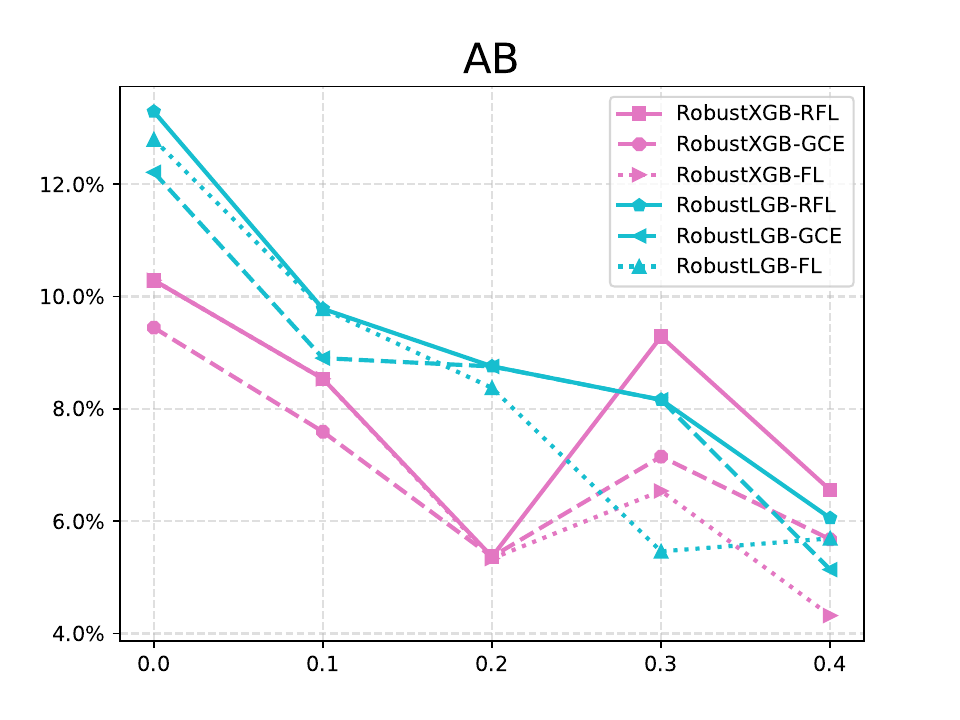}}\hfil
\subfigure[\scriptsize ratio:1692/1653/1655]{\includegraphics[width=0.3\textwidth]{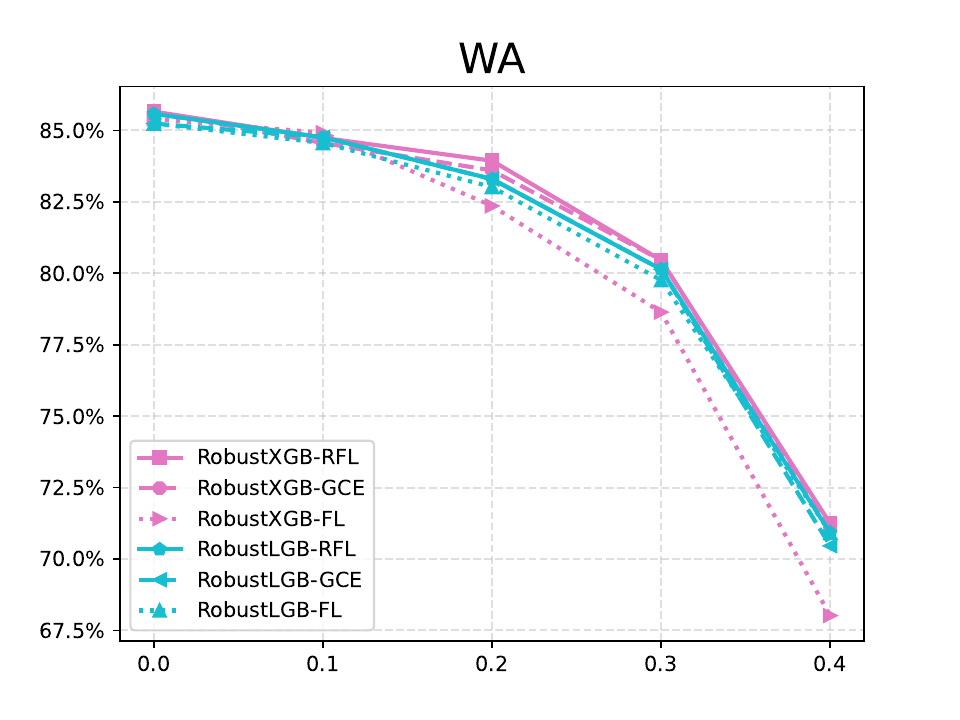}}
\subfigure[\scriptsize ratio: 147/211/239/95/43/331]{\includegraphics[width=0.3\textwidth]{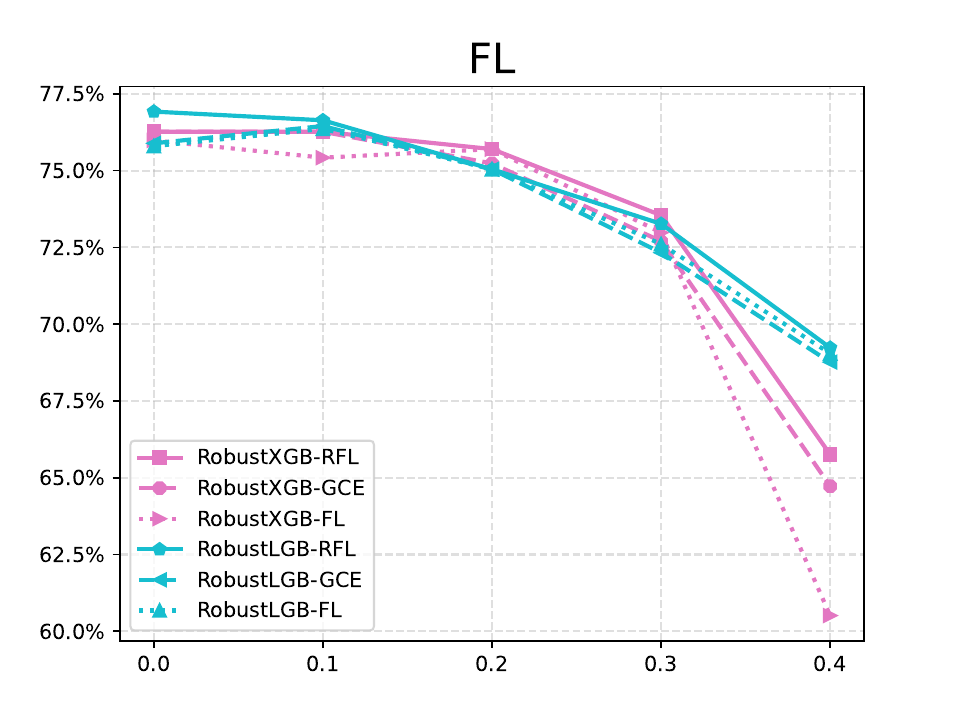}}
\subfigure[\scriptsize ratio: 160/54/35/35/13]{\includegraphics[width=0.3\textwidth]{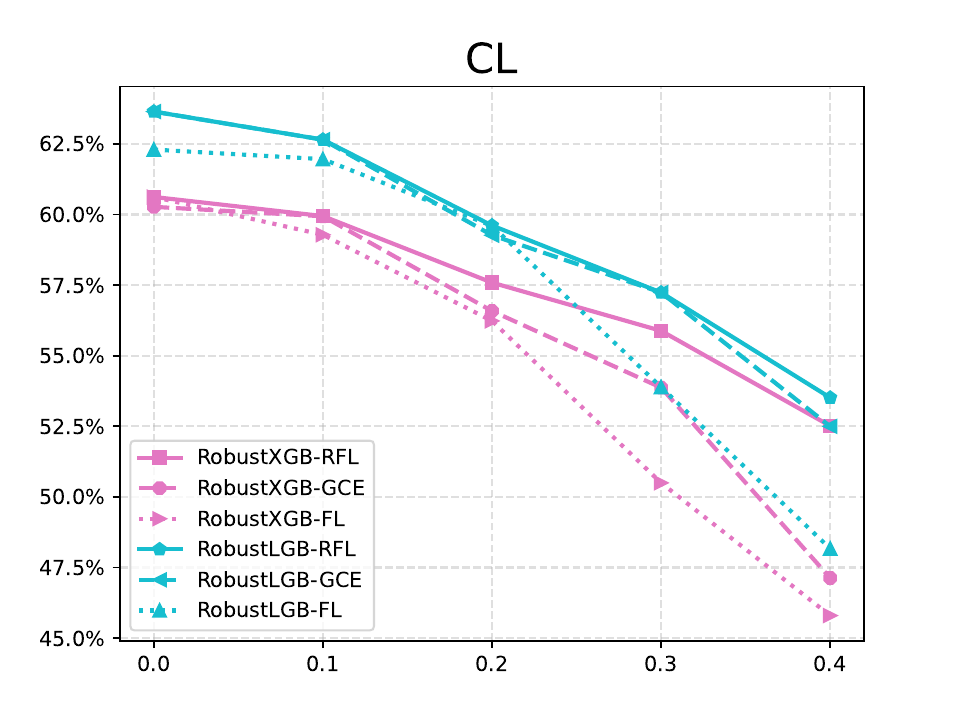}}
\caption{Results of ablation study. The first row is the 3 results of binary classification, together with the imbalanced ratio. The second row consists of 3 results for multi-class classification, along with the corresponding class distribution. Within each subfigure, the x-axis represents the noise level, while the y-axis corresponds to the AUCPR.}
\label{f.ablation}
\end{figure}

\section{Conclusions}
\label{s:conclu}
Boosting is a powerful ensemble method widely employed for classification tasks. However, its performance can be significantly impacted by label noise. To address this limitation, robust boosting algorithms have emerged as viable alternatives. However, most existing robust boosting methods are tailored for binary classification tasks and struggle with imbalanced data, missing values, and computational efficiency issues.

This paper establishes a robust theoretical foundation by demonstrating the adaptability of nonconvex loss functions, under specific constraints, for Newton's method-based Gradient Boosting Decision Trees (GBDT). This theoretical development broadens the scope of loss functions that GBDT models can effectively utilize. Expanding on this theoretical insight, we present a novel noise-robust boosting model known as Robust-GBDT. This model incorporates the GBDT framework with robust loss functions. Furthermore, we enhance the existing robust loss functions and introduce an innovative robust loss function known as the Robust Focal Loss, specifically crafted to address the class imbalance.

Furthermore, Robust-GBDT offers a user-friendly implementation approach, allowing it to effortlessly merge with existing open-source code. This integration is achieved by simply replacing the objective function with RFL. Consequently, Robust-GBDT leverages the computational efficiency and effective handling of complex datasets that are inherent in GBDT models.

In summary, this paper paves the way for the use of nonconvex loss functions within the second-order GBDT framework. This innovation results in the creation of Robust-GBDT, a robust boosting model that enhances the classification accuracy and resilience to label noise, especially in multi-class scenarios. Additionally, Robust-GBDT remains versatile, user-friendly, and computationally efficient, making it an ideal choice for tackling challenging real-world classification problems.


\appendix

\section{Reduction of \textit{gain}}
\label{a.existence}

Let $G = \sum_{i\in I}g_{i}$ denote the sum of the gradient in a leaf, and $H = \sum_{i\in I}h_{i}$ represent the sum of Hessian. 
For a fixed tree structure, we randomly set a few Hessian values to be negative and use $\hat{H}$ to stand for the new sum. 
Suppose that $G_L=\mu G$, $H_L=\nu H$, $\hat{H}=\theta H$, $\hat{H}_L=\tau \hat{H}$. Here, $\mu$, $\nu$, $\theta$, and $\tau$ are all independent and lie within the interval $(0, 1)$.

The new loss reduction is
\begin{equation}
\begin{aligned}
  gain &= (\frac{G_L^2}{\hat{H}_L+\lambda}+\frac{G_R^2}{\hat{H}_R+\lambda}-\frac{G^2}{\hat{H}+\lambda}) \\
    &= G^2(\frac{\mu^2}{\tau\theta H+\lambda}+\frac{(1-\mu)^2}{(1-\tau)\theta H+\lambda}-\frac{1}{\theta H+\lambda}).
\end{aligned}
\end{equation}

Since the tree structure is fixed, $\mu$ and $\nu$ are constants. $\tau$ and $\theta$ are free variables and are determined by the negative Hessian values. When $\tau$=$\mu$, $gain$ becomes 0, leading to the cessation of leaf splitting.

\section{Datasets Description}
\label{a.data}
Table. \ref{T.binary_dataset} and Table. \ref{T.multi_dataset} list the dataset used in this paper.
\#Sample means the instance number of the dataset, \#Feature represents the feature number, \#Classes denotes the class number. IR stands for imbalanced ratio, which equals the positive sample number divided by the number of negative samples.
\%Majority is the percentage of the majority class. Dataset CR has missing values, while others do not.

For binary datasets, labels of the minority class $C_{min}$ in the training dataset are flipped firstly according to a given noise rate $\gamma$, which means $\gamma \cdot \#C_{min}$ of labels are flipped into majority class $C_{maj}$. Then, labels of the majority class in the training data set are flipped in the same number $\gamma \cdot \#C_{min}$ into the minority class $C_{min}$.

For multi-class datasets, there are various methods to introduce noise. In this paper, we flip the labels accordingly to a \textit{pair-flipping matrix} $P$, which is defined as follows:
\begin{equation*}
    P(i, j) = 
    \begin{cases}
    1-\gamma, & i=j,\\
    \gamma, & i+1=j, \\
    0, & otherwise,
    \end{cases}
\end{equation*}
where $P(i, j)$ means the probability of flipping $i$ to $j$.

\begin{table*}[ht]
\normalsize
\renewcommand\arraystretch{1.1}
\centering
\begin{adjustbox}{width=0.9\textwidth}
\begin{tabular}{lccc}
    \toprule[1pt]
        Dataset & \#Sample & \#Feature  & IR\\
    \hline
        abalone\_19 (AB) & 4177  & 10  & 129.53\\
    \hline
        adult (AD) & 48842  & 14  & 3.18\\
    \hline
        car\_eval\_4 (CA) & 1728  & 21  & 25.58\\
    \hline
        credit-approval (CR) & 690  & 15  & 1.25\\
    \hline
        mammography (MA) & 11183  & 6  & 42.01\\
    \hline
        libras\_move(LI) & 360  &90  & 14.00\\
    \hline
        oil (OI) &937 & 49 & 21.85\\
    \hline
        ozone\_level (OZ) &2536  &72  & 33.74\\
    \hline
        protein\_homo (PR) & 145751  &74  & 111.46\\
    \hline
        SantanderCustomerSatisfaction (SA) &200000  & 200  & 8.95\\
    \hline
        sick\_euthyroid (SI) & 3163  & 42  & 9.80\\
    \hline
        solar\_flare\_m0 (SO) & 1389  &32 & 19.43\\
    \hline
        thyroid\_sick (TH) & 3772  & 52 & 15.33\\
    \hline
        yeast\_me2 (YE) & 1484  & 8  & 28.10\\
    \hline
        yeast\_ml8 (YL) &2417  &103  & 12.58\\
    \bottomrule[1pt]
\end{tabular}
\end{adjustbox}
\caption{Binary classification datasets.}
\label{T.binary_dataset}
\end{table*}

\begin{table*}[!ht]
\normalsize
\renewcommand\arraystretch{1.1}
\centering
\begin{adjustbox}{width=0.9\textwidth}
\begin{tabular}{lcccc}
    \toprule[1pt]
        Dataset & \#Sample & \#Feature & \#Classes  & \%Majority\\
    \hline
        balance (BA) & 625  &4 & 3 & 46.08\% \\
    \hline
        car (CA) &1728  & 21 & 4 & 70.02\% \\
    \hline
        cleveland (CL) &297  & 13 & 5 & 53.87\% \\
    \hline
        contraceptive (CO) &1473  &9 & 3 & 42.70\%\\
    \hline
        covertype (CV) &581012  &54 & 7 & 48.76\%\\
    \hline
        flare (FL) & 1066  & 19 & 6 & 31.05\%\\
    \hline
        gas-drift (GA) & 13910  & 128 & 6 & 21.63\%\\
    \hline
        letter (LE) & 20000  & 16  & 26 & 4.06\%\\
    \hline
        texture (TE) & 5500  & 40 & 11 & 9.09\%\\
    \hline
        waveform (WA) & 5000  &40 & 3 & 33.84\%\\
    \bottomrule[1pt]
\end{tabular}
\end{adjustbox}
\caption{Multi-class classification datasets.}
\label{T.multi_dataset}
\end{table*}






\bibliographystyle{elsarticle-num} 
\bibliography{references}

\end{document}